\algnewcommand{\Inputs}[1]{%
  \State \textbf{Inputs:}
  \Statex \hspace*{\algorithmicindent}\parbox[t]{.8\linewidth}{\raggedright #1}
}
\algnewcommand{\Initialize}[1]{%
  \State \textbf{Initialize:}
  \Statex \hspace*{\algorithmicindent}\parbox[t]{.8\linewidth}{\raggedright #1}
}
\algnewcommand{\Outputs}[1]{%
  \State \textbf{Outputs:}
  \Statex \hspace*{\algorithmicindent}\parbox[t]{.8\linewidth}{\raggedright #1}
}
\def\ka{\kappa}
\def\Ga{\Gamma}
\def\ver{\on{\ver}}
\def\Id{\on{Id}}
\def\Diff{\on{Diff}}
\def\x{\times}
\def\R{{\mathbb R}}
\def\exp{\operatorname{exp}}
\let\on=\operatorname
\def\Tr{\on{Tr}}
\def\vol{\on{vol}}
\def\dist{\on{dist}}
\def\Met{\on{Met}}
\begin{document}
\title{Structural Connectome Atlas Construction in the Space of Riemannian Metrics}
\titlerunning{Structural Connectome Atlas Construction}
%
\author{Kristen M. Campbell\thanks{M. Bauer was supported by NSF grants DMS-1912037, DMS-1953244. K. Campbell, H. Dai, S. Joshi and P. Fletcher were supported by NSF grant DMS-1912030. Z. Su was supported by NSF grant DMS-1912037, NIH/NIAAA award R01-AA026834. Data were provided in part by the Human Connectome Project, WU-Minn Consortium (Principal Investigators: David Van Essen and Kamil Ugurbil; 1U54MH091657) funded by the 16 NIH Institutes and Centers that support the NIH Blueprint for Neuroscience Research; and by the McDonnell Center for Systems Neuroscience at Washington University.}\inst{1}\and
Haocheng Dai\inst{1}\and
Zhe Su\inst{2}\and
Martin Bauer\inst{3}\and
P. Thomas Fletcher\inst{4}\and
Sarang C. Joshi\inst{1,5}}
\authorrunning{K. M. Campbell et al.}
%
\institute{Scientific Computing and Imaging Institute, University of Utah, Salt Lake City, UT \\
\email{kris@sci.utah.edu} \\
\and Department of Neurology, University of California Los Angeles, Los Angeles, CA \\
\and
Department of Mathematics, Florida State University, Tallahassee, FL\\
\and
Electrical \& Computer Engineering, University of Virginia, Charlottesville, VA \\
\and
Department of Bioengineering, University of Utah, Salt Lake City, UT \\
}

\maketitle              

\begin{abstract}
The structural connectome is often represented by fiber bundles generated from various types of tractography. We propose a method of analyzing  connectomes by representing them as a Riemannian metric, thereby viewing them as points in an infinite-dimensional manifold. After equipping this space with a natural metric structure, the Ebin metric, we apply object-oriented statistical analysis to define an atlas as the Fr\'echet mean of a population of Riemannian metrics. We demonstrate connectome registration and atlas formation using connectomes derived from diffusion tensors estimated from a subset of subjects from the Human Connectome Project.

\end{abstract}
\section{Introduction}
In this paper we develop for the first time statistical techniques on the  infinite-dimensional space of Riemannian metrics for analyzing the variability of the architecture of the human brain. Diffusion-weighted MRI (DWMRI) allows us to model an individual human brain as a Riemannian manifold with axonal connections that are geodesic curves of an appropriate metric. A Riemannian manifold is a topological manifold with an inner product defined on the tangent space at each point, the Riemannian metric. The Riemannian metric fundamentally defines the ``shape" of the manifold and defines the distance measured intrinsically on the manifold via geodesic curves. It is our fundamental assumption that the topology of the normal human brain is consistent across individuals, but the  difference in the connectomics  is because of the individual variation in the local Riemannian metric.

Several strategies have been used in previous work to construct white matter atlases from a population of diffusion MRI. Mori et al.~\cite{mori2008dti} construct a diffusion tensor imaging (DTI) atlas by registering the diffusion-weighted MRI of multiple subjects to a standardized anatomical template. They build the DTI atlas by transforming the diffusion tensors for each subject~\cite{alexander2001spatial} and then taking the Euclidean average of the transformed diffusion tensors at each voxel. This approach does not use the white matter directionality information encoded in the diffusion images during the registration. It also suffers from the fact that the Euclidean average of diffusion tensors does not take into account the directionality and tends to be fatter (i.e., less anisotropic) than the input tensors~\cite{fletcher2007riemannian}. Another approach by Yeh et al.~\cite{yeh2018population} is to register $q$-space diffusion images into an anatomical template and estimate the spin distribution function (SDF) at each voxel in the template. Then the SDFs are averaged on a per-voxel basis. While this method does take into account the directionality of the white matter in a local neighborhood, it does not take into account consistency of long-range white matter connections.

In this paper we develop a statistical groupwise atlas estimation algorithm for structural connectomes. The proposed algorithm uses not only local diffusion data but also long-range connectomics of the subjects as inferred by tractography~\cite{cheng2015tractography}. We do this by estimating a Riemannian metric of the brain manifold whose geodesic curves coincide with the tractography.

\section{Structural Connectomes as Riemannian Metrics}\label{sec:structasmet}

 In the white matter of the brain, the diffusion of water is restricted perpendicular to the
 direction of the axons. Diffusion-weighted MRI measures the
 microscopic diffusion of water in multiple directions at every voxel in a 3D volume. Thus, the
 directionality of connections in the brain can be locally inferred. 
Traditionally, global connections of the white matter have been estimated by a procedure called {\it  tractography}, which numerically computes integral curves of the vector field formed by the most likely direction of fiber tracts at each point. DTI models connection directions with a tensor, $D(x)$, at each voxel whose principal eigenvector is aligned with the direction of the strongest diffusion.  

Riemannian metrics that represent connectomics of a subject have been developed in diffusion imaging~\cite{o2002new}  and include the inverse-tensor metric $\tilde g = D(x)^{-1}$. However, the geodesics associated with the inverse-tensor metric tend to deviate from the principal eigenvector directions and take straighter paths through areas of high curvature.

In this work we build on the algorithm developed by~\cite{hao2014improved}, which estimates a spatially-varying function, $\alpha(x)$, that modulates the inverse-tensor metric to create a locally-adaptive Riemannian metric, $g_\alpha = e^{\alpha(x)}\tilde g$.
We briefly describe the method here for completeness but refer the reader to~\cite{hao2014improved} for details. This adaptive \emph{connectome metric}, $g_\alpha$, is conformally equivalent to the inverse-tensor metric and is better at capturing the global connectomics, particularly through regions of high curvature.  Figure~\ref{fig:metricestgeos} shows how well the geodesics of each metric match the integral curve of the vector field.  The connectome metric geodesics are very closely aligned with the integral curves.

The geodesic between two end-points, $p, q$, associated with the inverse-tensor metric, $\tilde g(x) = D(x)^{-1}$, minimizes the energy functional, $\tilde E$. While the geodesic associated with the connectome metric, $g_\alpha(x) = e^{\alpha(x)}D(x)^{-1}$, minimizes the energy functional, $E_\alpha$:
\begin{equation}
   \begin{aligned}
   \tilde E(\gamma) = \int_0^1\langle T(t), T(t) \rangle_{\tilde g} dt,
   \end{aligned}  
   \qquad
   \begin{aligned}
   E_\alpha(\gamma) = \int_0^1e^{\alpha(x)}\langle T(t), T(t) \rangle_{\tilde g} dt,   
   \end{aligned}
\end{equation}
where $\gamma: [0, 1] \to M$, $\gamma(0) = p$, $\gamma(1) = q$, $T= \frac{d\gamma}{dt}$. 

Analyzing the variation of $E_\alpha$ leads to the geodesic equation, $\mathrm{grad}\, \alpha = 2 \nabla_T T$, where the Riemannian gradient of $\alpha$, $\mathrm{grad}\, \alpha = \tilde g^{-1}\bigl(\frac{\partial \alpha}{\partial x^1}, \frac{\partial \alpha}{\partial x^2}, \cdots, \frac{\partial \alpha}{\partial x^n}\bigr)$, and $\nabla_T T$ is the covariant derivative of $T$ along its integral curve.


To enforce the desired condition where the tangent vectors, $T$, of the geodesic match the vector field, $V$, of the unit principal eigenvectors of $D(x)$, we minimize the functional, $F(\alpha) = \int_M || \mathrm{grad}\, \alpha - 2 \nabla_V V ||^2 dx$.  The equation for $\alpha$ that minimizes $F(\alpha)$ is 
\begin{equation}
\Delta \alpha = 2\, \mathrm{div} (\nabla_V V),
\label{eqn:poisson}
\end{equation}
where $\mathrm{div}$ and $\Delta$ are the Riemannian divergence and  Laplace-Beltrami operator.
We discretize the Poisson equation in Equation \eqref{eqn:poisson} using a second-order finite difference scheme that satisfies both the Neumann boundary conditions $\frac{\partial \alpha}{\partial \overrightarrow{n}}= \langle \mathrm{grad}\, \alpha, \overrightarrow{n} \rangle = \langle 2\nabla_V V, \overrightarrow{n}\rangle$ and the governing equation on the boundary.  We then solve for $\alpha$.

Note that we can use this method to match the geodesics of the connectome metric to other vector fields defining the tractogram, e.g., from higher-order diffusion models that can represent multiple fiber crossings in a voxel. In particular, for tractography based on fiber orientation distributions (FODs), we can use the techniques presented in \cite{nie2019topographic} to generate the vector field $V$.

\begin{figure}[h]
\centering
\includegraphics[width=\linewidth]{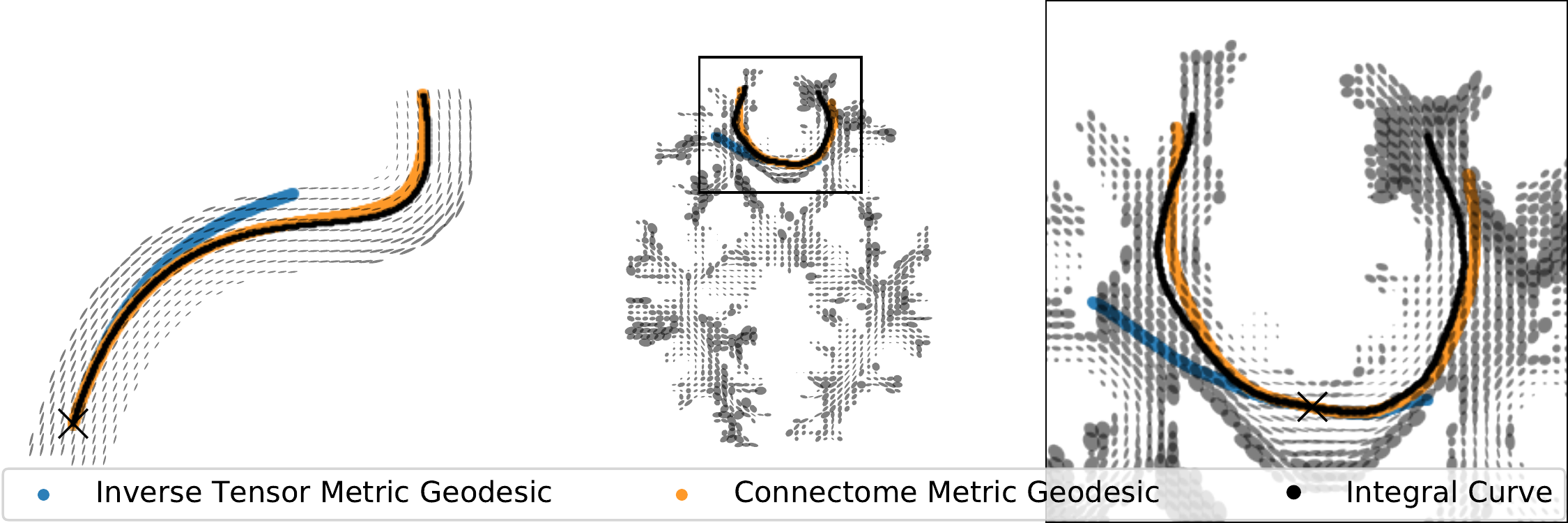}
\caption{A geodesic of the inverse-tensor metric (blue) and adaptive metric (orange), along with an integral curve (black) associated with the principal eigenvectors for a synthetic tensor field (left) and a subject's connectome metric from the Human Connectome Project (center).  Right shows a detailed view of the metric in the corpus callosum.}
\label{fig:metricestgeos}
\end{figure}

\section{The Geometry of the Manifold of all Metrics}\label{sec:riemet}
Once we have estimated a Riemannian metric for a human connectome, it is a point in the infinite-dimensional manifold, $\Met(M)$, where $M$ is the domain of the image. We will equip the infinite-dimensional space of all Riemannian metrics with a diffeomorphism-invariant Riemannian metric, called the Ebin or DeWitt metric~\cite{Ebin1970a,DeWitt67}. We base the statistical framework on this infinite-dimensional geometric structure. The invariance of the infinite-dimensional metric under the group of diffeomorphisms  $\Diff(M)$ is a crucial property, as it guarantees the independence of an initial choice of coordinate system on the brain manifold. In the following we will describe the details of our mathematical framework.

Let  $M$ be a smooth $n$-dimensional manifold; for our targeted applications $n$ will be two or three. We denote by $\Met(M)$ the space of all smooth Riemannian metrics on $M$, i.e., each element $g$ of the space $\Met(M)$ is a symmetric, positive-definite $0 \choose 2$ tensor field on $M$.  It is convenient to think of the elements of $M$ as being point-wise positive-definite sections of the bundle of symmetric two-tensors $S^2 T^\ast M$, i.e., smooth maps from $M$ with values in $S^2_+ T^\ast M$.  
Thus, the space $\Met(M)$  is an open subset of the linear space $\Ga(S^2 T^\ast M)$ of all smooth symmetric $0 \choose 2$ tensor fields and hence itself a smooth Fr\'echet-manifold \cite{Ebin1970a}. Furthermore, let $\Diff(M)$ denote the infinite-dimensional Lie group of all smooth diffeomorphisms of the manifold $M$. Elements of 
$\Diff(M)$ act as coordinate changes on the manifold $M$. This group acts on the space of metrics via pullback
\begin{align}\label{eq:diff_action}
\Met(M)\times\Diff(M)\to \Met(M), \qquad (g,\varphi)\mapsto \varphi^*g=g(T\varphi \cdot, T\varphi \cdot)\;.
\end{align}
It is important to note that the geometries of the metrics $g$ and $\varphi^*g$ are also related via $\varphi$. In particular, geodesics with respect to $g$ are mapped via $\varphi$ to geodesics with respect to $\varphi^*g$. 

On the infinite-dimensional manifold $\Met(M)$, there exists a natural Riemannian metric:
the reparameterization-invariant $L^2$-metric. To define the metric, we need to first characterize the tangent space of the manifold of all metrics:  $\Met(M)$ is an open subset of  $\Ga(S^2 T^\ast M)$. Thus, every  
tangent vector $h$ is a smooth bilinear form $h: TM \x_M TM \to \R$ that can be equivalently interpreted as a map $TM \to T^\ast M$.  The $L^2$-metric is given by
\begin{align}\label{eq.Ebinmetric}
    G^{E}_g(h,k)=\int_M \on{Tr}\big(g^{-1} hg^{-1} k\big)\vol(g),
\end{align}
with $g \in \Met(M)$, $h,k\in T_g\Met(M)$ and $\vol(g)$ the induced volume density of the metric $g$.  This metric, introduced in \cite{Ebin1970a}, is also known as the Ebin metric. We call the metric \emph{natural} as it requires no additional background structure and is consequently invariant under the action of the diffeomorphism group, i.e.,
\begin{equation}\label{eq:invariant}
G_g(h,k)=G_{\varphi^*g}(\varphi^*h,\varphi^*k)
\end{equation}
for all $\varphi \in \Diff(M)$, $g \in \Met(M)$ and $h,k\in T_g\Met(M)$. 
Note that the invariance of the metric follows directly from the substitution formula for multi-dimensional integrals.

The Ebin metric induces a particularly simple geometry on the space $\Met(M)$, with explicit formulas for geodesics, geodesic distance and curvature. In the following we will present the most important of these formulas, which will be of importance for our proposed metric matching framework.

First we note that a metric $g\in\Met(M)$, in local coordinates, can be represented as a field of symmetric, positive-definite $n\times n$ matrices that vary smoothly over $M$. Similarly, each tangent vector at $g$ can be represented as a field of symmetric $n\times n$ matrices. By the results of \cite{freed1989basic,gil1991riemannian,clarke2013geodesics}, one can reduce the investigations of the space of all Riemannian metrics to the study of the geometry of the finite-dimensional space of symmetric, positive-definite $n \times n$ matrices: the point wise nature of the Ebin metric allows one to solve the geodesic initial and boundary value problem on $\Met(M)$ for each $x\in M$ separately and thus the formulas for geodesics, geodesic distance and curvature on the finite-dimensional matrix space can be translated directly to results for the Ebin metric on the infinite-dimensional space of Riemannian metrics. 

 Note that the space of Riemannian metrics, $\Met(M)$ with the Ebin metric, is not metrically complete and not geodesically convex. Thus the minimal geodesic between two Riemannian metrics may not exist in $\Met(M)$, but only in a larger space; the metric completion $\overline{
 \Met}(M)$, which consists of all possibly degenerate Riemannian metrics. This construction has been worked out in detail by Clarke~\cite{clarke2013completion} -- including the existence of minimizing paths in $\overline{\Met}(M)$. In the following we will omit these details and refer the interested reader to the article~\cite{clarke2013completion}  for a more in-depth discussion. In the following theorem, we present an explicit formula for the minimizing geodesic in $\overline{\Met}(M)$ that connects two given Riemannian metrics.
\begin{theorem}\label{thm:ebingeodesics}
	For $g_0, g_1\in\Met(M)$ we define
    \begin{align}
        k(x) &= \log\left(g_0^{-1}(x)g_1(x)\right),\quad k_0(x) = k(x) - \frac{\Tr(k(x))}{n}\Id\\
		a(x) &= \sqrt[4]{\det(g_0(x))},\quad b(x) = \sqrt[4]{\det(g_1(x))},\quad\ka(x) = \frac{\sqrt{n\Tr(k_0(x)^2)}}{4}\\
		q(t,x) &= 1+ t\left(\frac{b(x)\cos(\ka(x))-a(x)}{a(x)}\right),\quad	r(t,x) = \frac{t b(x)\sin(\ka(x))}{a(x)},
	\end{align}
	Then the minimal path $g(t,x)$ with respect to the Ebin metric in $\overline{\Met}(M)$ that connects $g_0$ to $g_1$ is given by
	\begin{align}
	    g = \begin{cases}
	        \left(q^2+r^2\right)^{\frac2n}g_0\exp\left(\frac{\arctan(r/q)}{\ka}k_0\right) & 0<\ka<\pi,\\
	        q^{\frac4n}g_0 & \ka=0,\\
	        \left(1-\frac{a+b}{a}t\right)^{\frac4n}g_0\mathbbm{1}_{\left[0,\frac{a}{a+b}\right]}+\left(\frac{a+b}{b}t-\frac{a}{b}\right)^{\frac4n}g_1 \mathbbm{1}_{\left[\frac{a}{a+b},1\right]} & \ka\geq\pi,
	    \end{cases}
	\end{align}
		where $\mathbbm{1}$ denotes the indicator function in the variable $t$. We suppressed the functions' dependence on $t$ and $x$ for better readability.
	

\end{theorem}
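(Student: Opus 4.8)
The plan is to exploit the pointwise nature of the Ebin metric to reduce the statement to a finite-dimensional computation, to identify the resulting finite-dimensional model as a Euclidean cone, and then to invoke the elementary description of minimizing geodesics in a cone. Concretely: since $G^E_g(h,k)$ is an integral over $M$ of a quantity depending only on the \emph{values} $g(x),h(x),k(x)$, the results of \cite{freed1989basic,gil1991riemannian,clarke2013geodesics} allow us to solve the geodesic boundary value problem — on $\Met(M)$, and after completion on $\ol{\Met}(M)$ — for each $x\in M$ separately. It therefore suffices to fix $x$, suppress it from the notation, and determine the minimizing path between $g_0,g_1\in S^2_+T^\ast_x M\cong\Sym^+(n)$ for the metric $G_g(h,k)=\Tr(g^{-1}hg^{-1}k)\sqrt{\det g}$ on symmetric positive-definite $n\times n$ matrices, allowing the path to pass through positive-semidefinite (degenerate) matrices. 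Reassembling the pointwise paths yields a path that is smooth in $x$ and has finite Ebin energy, hence the desired minimizer in $\ol{\Met}(M)$, whose existence is in any case guaranteed by \cite{clarke2013completion}.

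\textbf{The cone structure.} Next I would make the decomposition $g=(\det g)^{1/n}\bar g$ with $\det\bar g=1$, separating the volume part from the ``shape'' $\bar g$, and check by a direct computation that the map $g\mapsto\big(r(g),\bar g\big)$ with $r(g)=\tfrac{4}{\sqrt n}(\det g)^{1/4}$ is an isometry from $(\Sym^+(n),G)$ onto the Euclidean cone $\big(\R_{>0}\times(\Sym^+(n)\cap\{\det=1\}),\,dr^2+r^2 g_{\mathrm{sym}}\big)$, where $g_{\mathrm{sym}}=\tfrac n{16}\Tr\big((\bar g^{-1}d\bar g)^2\big)$ is a fixed multiple of the symmetric-space metric on $\mathrm{SL}(n)/\mathrm{SO}(n)=\Sym^+(n)\cap\{\det=1\}$. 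The exponent $\tfrac14$ is forced by the $\sqrt{\det g}$ weight (one already sees this in the one-dimensional case $G_g(h,k)=g^{-3/2}hk$), and the constant $\tfrac n{16}$ is what makes the next normalization come out. Under this identification, $a=\det(g_0)^{1/4}$ and $b=\det(g_1)^{1/4}$ are proportional to the cone radii of $g_0$ and $g_1$, the matrix $\exp(k_0)$ is the det-one part of $g_0^{-1}g_1$, and $\ka=\tfrac14\sqrt{n\,\Tr(k_0^2)}$ is exactly the $g_{\mathrm{sym}}$-distance from $\bar g_0$ to $\bar g_1$ — the cone angle — because by homogeneity that distance equals $d(I,\exp(k_0))$, realized by $s\mapsto\exp(sk_0)$ with constant speed $\sqrt{\tfrac n{16}\Tr(k_0^2)}=\ka$.

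\textbf{Minimizing geodesics in a cone.} In a Euclidean cone, a minimizing geodesic from $(r_0,\xi_0)$ to $(r_1,\xi_1)$ lies in the totally geodesic sub-cone over a minimizing base geodesic joining $\xi_0$ to $\xi_1$, and that sub-cone is a flat planar sector of opening angle $\ka=d(\xi_0,\xi_1)$. If $0<\ka<\pi$ the sector is convex, so the minimizer is the straight segment; normalizing so that $g_0$ sits at the planar point $(1,0)$ (hence the division by $a$), its Cartesian coordinates are exactly $\big(q(t),r(t)\big)$, the cone radius of $g(t)$ is $\tfrac4{\sqrt n}a\sqrt{q^2+r^2}$ so that $\det g(t)=\det(g_0)(q^2+r^2)^2$, producing the factor $(q^2+r^2)^{2/n}$, and the polar angle $\arctan(r/q)\in[0,\ka]$, normalized by $\ka$, is the arclength fraction along $s\mapsto g_0\exp(sk_0)$ — this gives the first branch. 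If $\ka=0$ the two points lie on one ray and the minimizer is purely radial, giving $q^{4/n}g_0$. If $\ka\geq\pi$ the sector is non-convex: the minimizer runs radially into the apex (the degenerate metric $0$, reached at the time $\tfrac{a}{a+b}$ that splits $[0,1]$ in the ratio $a:b$ of the radii) and then radially out, which is precisely the indicator-function branch. In each case one verifies the endpoints directly; at $t=1$ one uses $g_0\exp(k_0)=g_0\exp(k)\,e^{-\Tr(k)/n}=g_1\,(\det g_0/\det g_1)^{1/n}$ together with $(q^2+r^2)|_{t=1}=b^2/a^2$ and $\arctan(r/q)|_{t=1}=\ka$ to recover $g_1$, and the remaining branches are checked similarly.

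\textbf{The main obstacle.} The linear algebra of the cone identification and the endpoint verifications are routine. The two delicate points are: (i) pinning down the \emph{exact} normalization so that the quantity $\ka$ defined in the statement coincides with the cone angle — this requires the precise restriction of $G^E$ to the shape directions, not merely its qualitative warped-product form, and one must also note that $\arctan(r/q)$ in the statement is to be read as the continuous polar angle, which is well defined on $[0,\ka]$ precisely because the straight segment avoids the apex when $\ka<\pi$; and (ii) the \emph{global} minimality claim in the case $\ka\geq\pi$, namely that every path avoiding the apex is strictly longer. Point (ii) is where passing to the completion $\ol{\Met}(M)$ is essential, and it is the step I expect to require the most care; it can be argued directly in the non-convex planar sector, or imported from Clarke's analysis \cite{clarke2013completion}.
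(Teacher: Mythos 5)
Your proposal is correct, but it takes a genuinely different — and substantially more informative — route than the paper, whose ``proof'' consists entirely of deferring to Clarke's Theorem 4.16 and the exponential/log-map formulas of \cite{clarke2013geodesics}, with the reformulation declared too tedious to present. You instead rederive the result from first principles: after the pointwise reduction, you exhibit the explicit isometry $g\mapsto\bigl(\tfrac{4}{\sqrt n}(\det g)^{1/4},\,\bar g\bigr)$ from the fiber onto the Euclidean cone over $\bigl(\mathrm{SL}(n)/\mathrm{SO}(n),\,\tfrac{n}{16}\Tr((\bar g^{-1}d\bar g)^2)\bigr)$, and then read off the three branches of the theorem from the elementary geometry of geodesics in a flat sector of opening angle $\ka$ (straight chord for $0<\ka<\pi$, radial segment for $\ka=0$, broken path through the apex for $\ka\ge\pi$). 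Your normalizations all check out: the exponent $\tfrac14$ and the factor $4/\sqrt n$ follow from matching $dr^2$ against $n\rho^{n/2-2}d\rho^2$, the identity $\log(\bar g_0^{-1}\bar g_1)=k_0$ makes $\ka$ the base distance, and the endpoint computation $g_0\exp(k_0)=(\det g_0/\det g_1)^{1/n}g_1$ is exactly right. What your approach buys is transparency — the case split on $\ka$ versus $\pi$, the meaning of $\arctan(r/q)$ as a continuous polar angle, and the Corollary's law-of-cosines distance formula all become self-evident — at the cost of having to justify two points the paper outsources wholesale to \cite{clarke2013completion}: the legitimacy of the pointwise reduction in the completion, and global minimality of the apex path when $\ka\ge\pi$; you correctly identify both. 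One small overstatement: the reassembled path is in general \emph{not} smooth in $x$ (the branch taken depends on $x$ through $\ka(x)$, and the $\ka\ge\pi$ branch is not even $C^1$ in $t$), which is precisely why the statement lives in $\ol{\Met}(M)$ and why Clarke's measure-theoretic completion framework, rather than a smoothness argument, is the right thing to invoke there.
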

\begin{proof}
    This theorem is essentially a reformulation of the minimal geodesic formula given in \cite[Theorem 4.16]{clarke2013geodesics}. We obtain it by combining formulas for the exponential mapping, inverse exponential mapping, and minimal geodesic in \cite[Theorem 4.4, 4.5, 4.16]{clarke2013geodesics}. As these calculations are rather tedious we refrain from presenting them.
\end{proof}
We now recall that the geodesic distance of a Riemannian metric is defined as the infimum of all paths connecting two given points,
\begin{equation}
\dist_{\Met}(g_0,g_1)=\inf \int_0^1 \sqrt{G_g(\partial_t g,\partial_t g)} dt,
\end{equation} 
where the infimum is taken over all paths $g:[0,1]\to \Met(M)$ with 
$g(0)=g_0$ and $g(1)=g_1$. As a direct consequence of Theorem~\ref{thm:ebingeodesics} we obtain an explicit formula for this distance function:
\begin{corollary}
	Let $g_0, g_1\in\Met(M)$ and let $k, k_0$, $a$, $b$ and $\ka$ be as in Theorem~\ref{thm:ebingeodesics}. Let 
$
	\theta(x) = \min\left\{\pi, \ka(x)\right\}.
$
	Then the squared geodesic distance of the Ebin metric is given by:
	\begin{align}\label{eq:distance_function}
	\dist_{\Met}(g_0, g_1)^2 = \frac{16}{n}\int_M \left(a(x)^2 - 2a(x)b(x)\cos\left(\theta(x)\right) + b(x)^2\right)dx.
	\end{align}
\end{corollary}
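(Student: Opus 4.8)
The plan is to read off the distance formula directly from the explicit minimizing path in Theorem~\ref{thm:ebingeodesics}, exploiting the fact that the Ebin metric is \emph{fibered} over $M$. First I would record the structural observation behind ``direct consequence'': for any path $t\mapsto g(t,\cdot)$ in $\overline{\Met}(M)$ the length integrand is, in local coordinates, $G_{g(t)}(\partial_t g,\partial_t g)=\int_M \sqrt{\det g(t,x)}\,\Tr\!\big((g^{-1}\partial_t g)^2\big)(t,x)\,dx$, so $\dist_{\Met}(g_0,g_1)=\inf\int_0^1\big(\int_M F(t,x)\,dx\big)^{1/2}dt$ with $F\ge 0$. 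Minkowski's integral inequality bounds this below by $\big(\int_M(\int_0^1\sqrt{F(t,x)}\,dt)^2\,dx\big)^{1/2}$, and the inner $t$-integral is in turn at least the geodesic distance $d_x(g_0(x),g_1(x))$ in the finite-dimensional space of positive-definite symmetric matrices at $x$ equipped with $h\mapsto\sqrt{\det g}\,\Tr((g^{-1}h)^2)$. Hence $\dist_{\Met}(g_0,g_1)^2\ge\int_M d_x(g_0(x),g_1(x))^2\,dx$, with equality attained precisely by the path of Theorem~\ref{thm:ebingeodesics}, which is a pointwise concatenation of minimizing matrix geodesics; this pointwise reduction is exactly the content of the results of Clarke and Gil-Medrano--Michor cited in the paper, so I would invoke it rather than reprove it.

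It then remains to compute $d_x(g_0(x),g_1(x))^2$ as the (squared) length of the explicit geodesic. On the generic branch $0<\ka<\pi$ the geodesic has constant speed, so its squared length equals $\sqrt{\det g_0}\,\Tr\!\big((g_0^{-1}\partial_t g|_{0})^2\big)$. Differentiating $q$, $r$, $\arctan(r/q)$ and the matrix exponential at $t=0$ (where $q=1$, $r=0$, so $g(0)=g_0$) yields $g_0^{-1}\partial_t g|_{0}=\tfrac{4(b\cos\ka-a)}{na}\Id+\tfrac{b\sin\ka}{a\ka}k_0$; substituting this and using $\Tr(\Id)=n$, $\Tr(k_0)=0$ (so the cross term drops), $\Tr(k_0^2)=\tfrac{16\ka^2}{n}$ (from the definition of $\ka$), and $\sqrt{\det g_0}=a^2$ collapses the whole expression to $\tfrac{16}{n}\big(a^2-2ab\cos\ka+b^2\big)$. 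The branch $\ka=0$ is the same computation with $k_0=0$. For $\ka\ge\pi$ the minimizer leaves $\Met(M)$: it shrinks $g_0$ to the zero tensor on $[0,\tfrac{a}{a+b}]$ and grows $g_1$ from zero on $[\tfrac{a}{a+b},1]$; on each segment the volume factor $u^2a^2$ (resp.\ $v^2b^2$) exactly cancels the $u^{-2}$ (resp.\ $v^{-2}$) coming from $g^{-1}\partial_t g$, so the pointwise speed is the constant $\tfrac{4(a+b)}{\sqrt n}$, and the segment lengths $\tfrac{4a}{\sqrt n}$ and $\tfrac{4b}{\sqrt n}$ add to give $d_x^2=\tfrac{16}{n}(a+b)^2=\tfrac{16}{n}\big(a^2-2ab\cos\pi+b^2\big)$. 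Writing $\theta(x)=\min\{\pi,\ka(x)\}$ unifies the three cases, and integrating over $M$ gives \eqref{eq:distance_function}.

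The only genuinely delicate point is the pointwise reduction of the first paragraph: one must know that the infimum of the length functional is achieved by optimizing independently at each $x$, which needs both the Minkowski-type inequality above and the fact that the resulting family of pointwise minimizers --- including its passage through degenerate metrics when $\ka\ge\pi$ --- is an admissible path in the completion $\overline{\Met}(M)$ realizing the claimed length. Since this has been established carefully by Clarke~\cite{clarke2013geodesics,clarke2013completion}, I would cite it and simply present the routine differentiation above as the remaining content of the proof.
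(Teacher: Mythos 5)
Your proposal is correct and follows the same route the paper intends: the corollary is obtained by computing the length of the explicit minimal path of Theorem~\ref{thm:ebingeodesics}, using the pointwise (fiberwise) reduction established by Clarke and Gil-Medrano--Michor, which the paper invokes without writing out the computation. Your case-by-case length calculations (constant speed for $0\le\ka<\pi$, and the two-segment concatenation through the degenerate cone point for $\ka\ge\pi$) all check out and simply supply the details the paper omits.
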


Having equipped the space of Riemannian metric with the distance function~\eqref{eq:distance_function}, we can consider the Fr\'echet mean $\hat g$ of a collection of metrics $g_1,\ldots g_N$, which is defined as a minimizer of the sum of squared distances:
\begin{equation}
\hat{g}=\underset{g }{\operatorname{\rm argmin}}
    \sum_{i=1}^{N}\operatorname{dist}_{\Met}^{2}(g,g_i). 
\end{equation}
One could directly minimize this functional using a gradient-based optimization procedure. As our distance function is the geodesic distance function of a Riemannian metric and since we have access to an explicit formula for the minimizing geodesics, we will instead use the iterative geodesic marching algorithm, see e.g.~\cite{ho2013recursive}, to approximate the Fr\'echet mean. Given $N$ Riemannian metrics $g_i$, we approximate the Fr\'echet mean via $\hat g=\hat g_{N}$, where $\hat g_{i}$ is recursively defined as 
 $\hat g_0=g_0$, $\hat g_i(x)= g(1/(i+1),x)$ and where
 $g(t,x)$ is the minimal path, as given in Theorem~\ref{thm:ebingeodesics}, connecting $\hat g_{i-1}$ to the $i$-th data point $g_i$.
Thus one only has to calculate $N$ geodesics \emph{in total} in the space of Riemannian metrics, whereas a gradient-based algorithm would require one to calculate $N$ geodesic distances \emph{in each step} of the gradient descent.

\subsection{The induced distance function on the diffeomorphism group}\label{section:diffmetric}
We can use the geodesic distance function of the Ebin metric to induce a right-invariant distance function on the group of diffeomorphisms. As we will be using this distance function as a regularization term in our matching functional, we will briefly describe this construction here. We fix a Riemannian metric $g\in \Met(M)$ and define the ``distance'' of a diffeomorphism $\varphi$ to the identity via
\begin{equation}
    \operatorname{dist}_{\Diff}^{2}(\operatorname{id},\varphi) = \operatorname{dist}_{\Met}^{2}(g,\varphi^*g).
\end{equation}
To be more precise, this distance can be degenerate on the full diffeomorphism group since the isometries of the Riemannian metric $g$ form the kernel of $\operatorname{dist}_{\Diff}$. For our purposes we will consider the Euclidean metric for the definition of $\operatorname{dist}_{\Diff}$. Thus the only elements in the kernel are translations and rotations. The right invariance of $\operatorname{dist}_{\Diff}$ follows directly from the $\Diff(M)$-invariance of the Ebin metric. We note, however, that $\dist_{\Diff}$ is not directly associated with a Riemanian structure on the diffeomorphism group: the orbits of the diffeomorphism group in the space of metrics are not totally geodesic and thus $\dist_{\Diff}$ is not the geodesic distance of the pullback of the Ebin metric to the space of diffeomorphisms. See also~\cite{KLMP2013} where this construction has been studied in more detail.

\section{Computational Anatomy of the Human Connectome}
Fundamental to the precise characterization and comparison of the human connectome of an individual subject or a population as a whole is the ability to map or register two different human connectomes. The framework of Large Deformation Diffeomorphic Metric Mapping (LDDMM) is well developed for registering points~\cite{joshi2000landmark} curves~\cite{glaunes2008large} and surfaces~\cite{vaillant2005surface} all modeled as sub-manifolds of $\R^{3}$ as well as images modeled as an $L^2$ function~\cite{beg2005computing}.
This framework has also been extended to densities~\cite{bauer2015diffeomorphic} modeled as volume forms. We now extend the diffeomorphic mapping framework to the  connectome modeled as Riemannian metrics. The diffeomorphisms group acts naturally on the space of metrics, see Equation \eqref{eq:diff_action}. With this action and a reparameterization-invariant metric, the problem of registering two connectomes  fits naturally into the framework of computational anatomy. We register two connectomes by solving the following minimization problem:
\begin{equation}
E(\varphi)=\underset{\varphi \in \Diff(M)}{\operatorname{inf}} \operatorname{dist}_{\Diff}^{2}(\operatorname{id},\varphi)+\lambda \operatorname{dist}_{\Met}^{2}(g_0,\varphi^*g_1)\label{eq:energy}
\end{equation}
where  $\operatorname{dist}_{\Diff}$ is a right invariant distance on $\Diff$ and $\operatorname{dist}_{\Met}$ is a reparameterization-invariant distance  on the space of all Riemannian metrics, e.g., the geodesic distance of the metrics studied above.  The first term measures the deformation cost and the second term is a similarity measure between the target and the deformed source connectome.  The invariance of the two distances is essential for the minimization problem to be independent of the choice of coordinate system on the brain manifold.

We use the distance function as introduced in Section~\ref{section:diffmetric} to measure the deformation cost, i.e., $\operatorname{dist}_{\Diff}(\operatorname{id},\varphi)=\operatorname{dist}_{\Met}(g,\varphi^*g)$ where $g$ is the restriction of the euclidean metric to the brain domain. 
This choice greatly increases computational efficiency since we can now use the formulas from Section~\ref{sec:riemet} as explicit formulas for both terms of the energy functional.
To minimize the energy functional, we use a gradient flow approach described in Algorithm~\ref{algo1}, where the gradient on $\Diff(M)$ is calculated with respect to a right invariant Sobolev metric of order one, called the information metric~\cite{bauer2015diffeomorphic}. We choose this specific gradient because of the relation of the information metric to both the Ebin metric on the space of metrics and the Fisher-Rao metric on the space of probability densities. See~\cite{KLMP2013,bauer2015diffeomorphic} for a precise description of the underlying geometric picture. 


Note, that our framework allows for the immediate inclusion of  points, curves, surfaces and images in the registration problem, which we plan to incorporate in future work. Image intensity information, for example, can be easily incorporated in the registration problem by simply adding an appropriate similarity measure for the image term (e.g. the standard $L^2$ metric between the deformed source image and the target image) to the energy functional.


\algnewcommand{\IIf}[1]{\State\algorithmicif\ #1\ \algorithmicthen}
\algnewcommand{\EndIIf}{\unskip\ \algorithmicend\ \algorithmicif}
\begin{algorithm}[h]
\caption{Inexact Metric Matching Algorithm}\label{algo1} 
    \begin{algorithmic}
        \Inputs{source and target metric $g_0$, $g_1$}
        \Initialize{learning rate $\epsilon$; weight parameter $\lambda$; max iteration times $\operatorname{MaxIter}$} 
        \State{$\varphi,E\leftarrow\operatorname{id},0$}
        \For{$\operatorname{iteration}=0:\operatorname{MaxIter}$}
            \State{$\varphi^*g_1\leftarrow(d\varphi)^T(g_1\circ\varphi)(d\varphi)$}\Comment{Pullback of $\varphi$}
            \State{$E\leftarrow \operatorname{EbinEnergy}(\varphi^*g_1,g_0,\lambda)$}\Comment{Calculate energy by Equation \eqref{eq:energy}}
            \State{$v\leftarrow-\Delta^{-1}(\operatorname{E.grad})$}\Comment{Transfer gradient w.r.t. information metric to $L^2$}
            \State{$\psi\leftarrow\operatorname{id}+\epsilon v$}\Comment{Construct the approximation} 
            \State{$\varphi\leftarrow\psi\circ\varphi$}\Comment{Update the diffeomorphism}
        \EndFor
        \State\Return{$\varphi$}
    \end{algorithmic}  
\end{algorithm}

\subsection{Estimating the Atlas for a Population of Connectomes.}\label{sec:atlas}
Given a collection of connectomes modeled as points on an abstract Riemannian manifold, we can directly apply least squared estimation to define the average connectome. Thus the template estimation problem can be formulated as a joint minimization problem:
\begin{align}
   \hat{g} = \underset{g,\varphi_i }{\operatorname{\rm argmin}}
    \sum_{i=1}^{N}\operatorname{dist}_{\Diff}^{2}(\operatorname{id},\varphi_i)+ \lambda \operatorname{dist}_{\Met}^{2}(g,\varphi_i^*g_i) \label{eq:atlas}
\end{align}
We use the iterative alternating algorithm proposed in \cite{joshi2004unbiased} for solving the above optimization problem: we alternate gradient steps between optimizing with respect to each diffeomorphism, $ \varphi^{-1}_i , i=1,\cdots,N $, and minimizing with respect to the metric average $\hat g$. In the metric optimization step we use the  Fr\'echet mean algorithm described in Section~\ref{sec:riemet}. See Algorithm~\ref{algo2} for details of this process.

\begin{algorithm}[h]
\caption{Atlas Building Algorithm}\label{algo2}
    \begin{algorithmic}
        \Inputs{sample metric fields list $G$}
        \Initialize{max iteration times $\operatorname{MaxIter}$}
        \For{$\operatorname{iteration}=0:\operatorname{MaxIter}$}
            \State $g_{\operatorname{mean}}\leftarrow\operatorname{FrechetMean}(G)$\Comment{Section~\ref{sec:riemet}}
            \For{$i=0:\operatorname{len}(G)$}
                \State{$\varphi\leftarrow\operatorname{MetricMatching}(g_{\operatorname{mean}},G[i])$}\Comment{Algorithm~\ref{algo1}}
                \State{$G[i]\leftarrow\varphi^*G[i]$}\Comment{Update $G[i]$ by pullback of $\varphi$}
            \EndFor
        \EndFor
        \State\Return{$g_{\operatorname{mean}}$}
    \end{algorithmic}  
\end{algorithm}  

\subsection{Implementation Details}
As done in \cite{hao2014improved}, we apply a mask to both the connectome metric estimation process and the atlas building algorithm for two reasons.  First, it is important that we constrain the problem to biologically realistic white matter tracts by not allowing tractography to flow through regions of CSF.  Second, we avoid numeric issues associated with processing air and other noisy regions outside the skull.  This also speeds up computation, as we only need to look at voxels inside the masked region instead of the entire image volume.  For the atlas building algorithm, we deform each individual mask into atlas space at each outer iteration, and apply the union of these deformed masks when computing the current atlas estimate.  For each iteration of the atlas building algorithm, we perform only 2 iterations inside the metric matching function to avoid overfitting the individual metrics to early estimates of the Fr\'echet mean. In practice, we find the algorithm behaves well when we update $\epsilon$ in Algorithm~\ref{algo1} such that $1/\epsilon$ is approximately equal to the energy \eqref{eq:energy}. 

\section{Results}
\paragraph{Simulated Data:}
We verified our method by generating vector fields whose central integral curves are a family of parameterized cubic functions. We used the method of parallel curves to add vectors for additional integral curves parallel to the central curve with a distance $k \in [-0.2, 0.2]$ from the central curve.  We then constructed tensors whose principal eigenvectors align with the generated vector fields and that have a specified major axis to minor axis ratio of 6:1.  

We first estimated the adaptive metric conformal to the inverse-tensor metric such that the geodesics of the adaptive metrics align with the integral curves of the simulated vector fields. After finding the connectome metric for each subject, we  ran 400 iterations of the atlas building Algorithm~\ref{algo2} to estimate the atlas in Figure \ref{fig:cubicmeans}. To help the diffeomorphisms update smoothly, we set $\lambda = 100$ in Equation \eqref{eq:energy} and the learning rate $\epsilon=5$ in Algorithm~\ref{algo1}. 

We compared a geodesic of the atlas starting from a particular seed point with geodesics of the 4 connectome metrics starting from the atlas seed point mapped into individual space. Figure \ref{fig:cubicmeans} shows these individual geodesics in atlas space before and after applying the diffeomorphisms.  We see that the atlas geodesic is nicely centered in the middle of the undeformed individual geodesics as expected.  Also, the deformed individual geodesics align well with the atlas geodesic.

\begin{figure}[h]
\centering
\includegraphics[width=\linewidth]{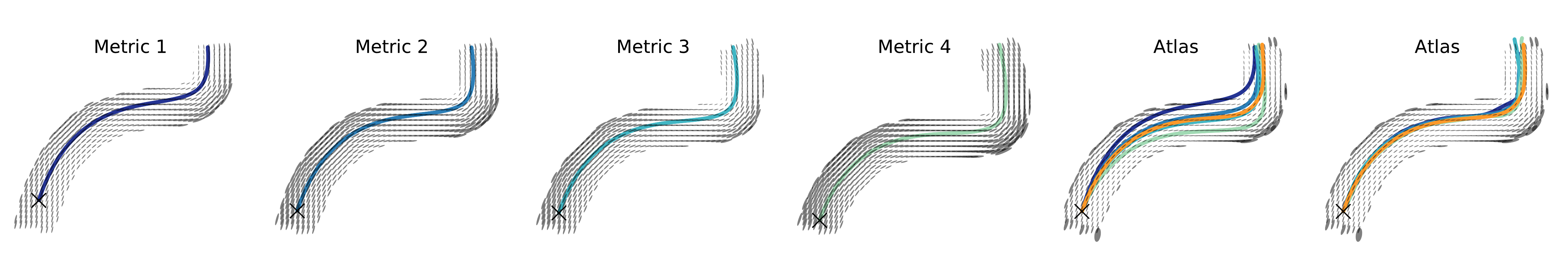}
\caption{Left: geodesics of 4 synthetic metrics starting from the atlas seed point (X) mapped into each metric's space. Second from right: estimated atlas with geodesic (orange) starting from the seed point (X) overlaid on non-deformed geodesics from each of the 4 metrics. Right: estimated atlas with geodesic (orange) overlaid on geodesics from the 4 metrics deformed into atlas space. }
\label{fig:cubicmeans}
\end{figure}
\paragraph{Real Data:}
We used a subset of subjects from the Human Connectome Project Young Adult (HCP) dataset \cite{glasser2013minimal}.  For each subject, we fit a diffusion tensor model to the images with a $b$-value of 1000 using \verb!dtifit! from FSL \cite{basser1994estimation} and generated a white-matter mask based on fractional isotropy values. We estimated the adaptive connectome metric from the inverse-tensor metric associated with the diffusion tensors.

To generate the atlas shown in Figure \ref{fig:ebinbraintoatlas}, we ran atlas building for 5000 iterations with $\lambda=100$, $\epsilon=1$, which took 50 minutes on an Intel Xeon Silver 4108 CPU. The regularization term, $\lambda$, balances the magnitudes of the diffeomorphisms from each subject's connectome metric to the atlas. To ensure that the final geodesics in the atlas also follow the major eigenvectors of the atlas tensors, we solve for the $\alpha$ conformal factor for the atlas as described in Section~\ref{sec:structasmet}. 

\begin{figure}[h]
\centering
\includegraphics[width=\linewidth]{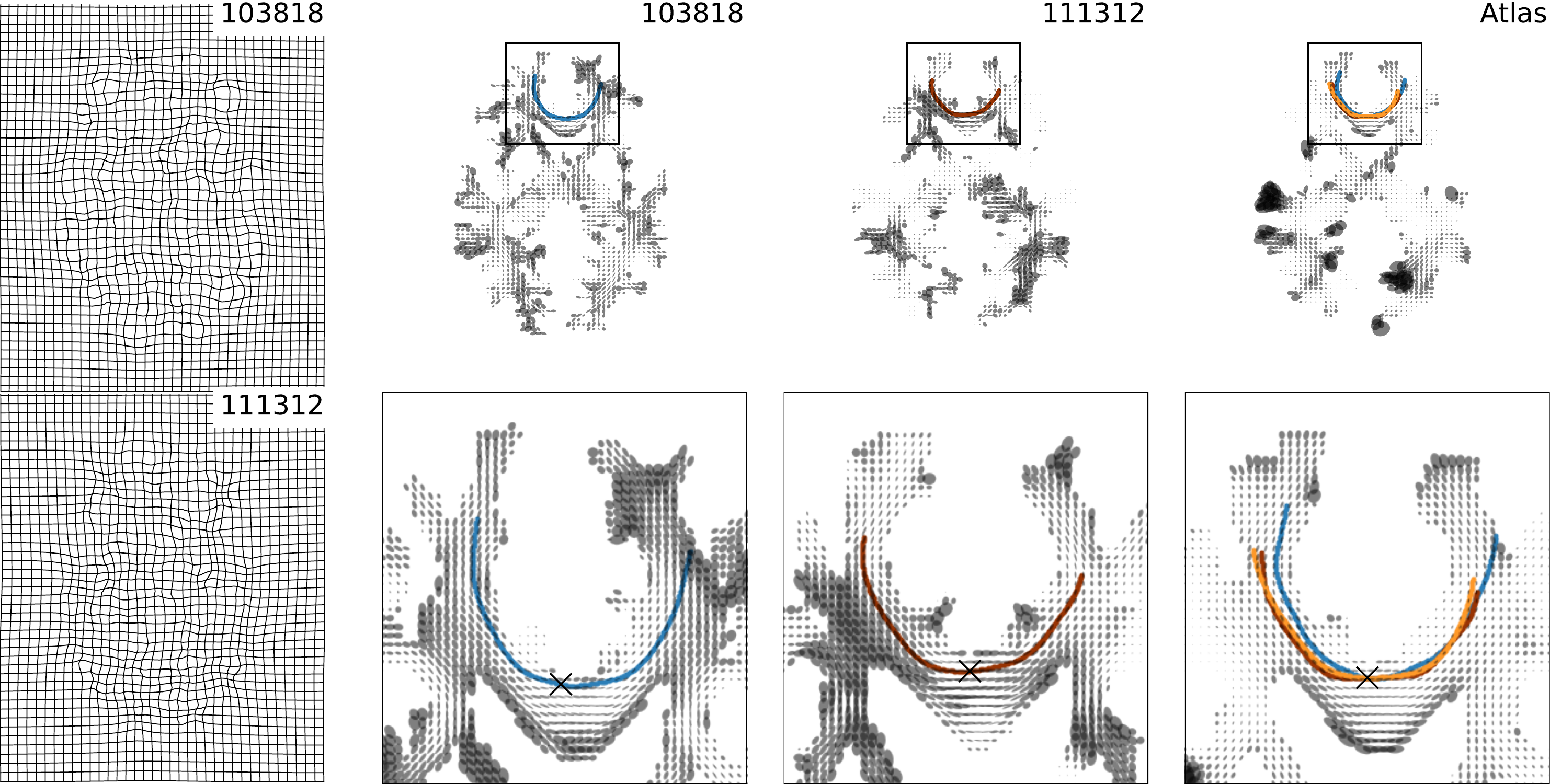}
\caption{Left: diffeomorphism from HCP subjects (103818, 111312) to the atlas. Center: each subject's connectome metric and a geodesic (blue, red) starting from the atlas seed (X) mapped to subject space.  Right: atlas and a geodesic (orange) starting at the seed (X). Subject geodesics are mapped to atlas space (blue, red). Bottom: detailed view of corpus callosum.}
\label{fig:ebinbraintoatlas}
\end{figure}

%

\section{Conclusions}
In this paper, we introduce a novel framework for statistically analyzing structural connectomes by representing them as a point on the manifold of Riemannian metrics, enabling us to perform geometric statistics. Using this representation, we build a framework for connectome atlas construction based on the action of the diffeomorphism group and the natural Ebin metric on the space of all Riemannian metrics. Although the Ebin metric is canonical, it is not the only diffeomorphism-invariant metric available on the space of all Riemannian metrics, c.f.~\cite{bauer2013sobolev}.  Our framework allows for other choices of metrics and regularization terms, which we will explore more fully in future work. We also plan to investigate in more detail the convergence properties of the proposed algorithms, the impact of the parameter choice on results, and comparisons to other existing methods. We expect this new methodology to open up opportunities for a deeper understanding of structural connectomes and their variabilities.

\bibliographystyle{splncs04}
\bibliography{StructuralConnectomeAtlas}

\end{document}